\documentclass[10pt,twocolumn,letterpaper]{article}

\usepackage{cvpr}      
\definecolor{cvprblue}{rgb}{0.21,0.49,0.74}
\usepackage[pagebackref,breaklinks,colorlinks,allcolors=cvprblue]{hyperref}

\def\confName{CVPR}
\def\confYear{2026}
\usepackage{subcaption}
\usepackage{amssymb}
\usepackage{mathtools}
\usepackage{amsthm}
\usepackage{times}
\usepackage{helvet}
\usepackage{courier}
\usepackage{xcolor}
\usepackage{graphicx}
\usepackage[accsupp]{axessibility} 

\usepackage{algorithm}
\usepackage{algorithmic}
\usepackage{bbold}
\usepackage{booktabs} 
\usepackage{resizegather}
\usepackage{graphicx} 

\usepackage{graphicx} 
\usepackage{amsmath,amsfonts,bm}
\usepackage{amsthm}
\usepackage{algorithm}
\usepackage{algorithmic}

\newtheorem{proposition}{Proposition}

\title{Bi-Level Optimization for Single Domain Generalization }

\author{Marzi Heidari$^*$, Hanping Zhang$^*$, Hao Yan$^*$, Yuhong Guo$^{*\dagger}$\\
$^*$School of Computer Science, Carleton University, Ottawa, Canada  \\ 
$^\dagger$Canada CIFAR AI Chair, Amii, Canada \\
}

\begin{document}

\maketitle

\begin{abstract}
Generalizing from a single labeled source domain to unseen target domains, without access to any target data during training, remains a fundamental challenge in robust machine learning. 
We address this underexplored setting, known as Single Domain Generalization (SDG), by proposing BiSDG, a bi-level optimization framework that explicitly decouples task learning from domain modeling. BiSDG simulates distribution shifts through surrogate domains constructed via label-preserving transformations of the source data. 
To capture domain-specific context, we propose a domain prompt encoder that generates lightweight modulation signals 
to produce augmenting features via
feature-wise linear modulation.
The learning process is formulated as a bi-level optimization problem: 
the inner objective optimizes task performance under fixed prompts, while the outer objective 
	maximizes generalization across the surrogate domains
by updating the domain prompt encoder.
We further develop a practical gradient approximation scheme that enables efficient bi-level training without second-order derivatives. Extensive experiments on various SGD benchmarks demonstrate that BiSDG consistently outperforms prior methods, setting new state-of-the-art performance in the SDG setting.

\end{abstract}
\section{Introduction}

Traditional deep learning methods typically assume that training and testing data share the same distribution, resulting in limited generalization to out-of-domain scenarios. Domain Generalization (DG) addresses this limitation by training models that can generalize to unseen target domains. Recent DG approaches either aim to learn domain-invariant representations via feature alignment \citep{li2018deep, mahajan2021domain}, simulate train-test splits through meta-learning \citep{li2018learning, du2020learning}, or seek flat minima of empirical risk to improve robustness \citep{cha2021swad, zhang2023flatness}.
However, conventional DG assumes access to data from multiple source domains, which restricts its applicability in data-sparse scenarios. Additionally, the typical evaluation setting involves testing on a single target domain, limiting assessment of generalization to diverse out-of-domain scenarios.

Single Domain Generalization (SDG) addresses these limitations by training on a single source domain while evaluating on multiple unseen target domains, thus posing a more challenging and realistic setting. Since standard DG approaches often rely on multi-domain data, they are not directly applicable to SDG. Early SDG methods \citep{devries2017improved, cubuk2019autoaugment, cubuk2020randaugment} introduce various data augmentation techniques to enhance in-domain generalization and robustness to corruptions. Other methods \citep{volpi2018generalizing, Long2020Maximum, zheng2024advst} adopt adversarial data augmentation to improve 
out-of-domain generalization performance. Additionally, generative approaches \citep{qiao2020learning, li2021progressive, chen2023meta} synthesize auxiliary data to aid generalization from a single domain.
To date, most existing SDG methods adopt target-agnostic strategies, due to the constraint of not accessing target domains. 
Such strategies face a “lottery ticket” dilemma: models tend to perform well on target domains similar to the augmented source domain, but poorly on dissimilar ones.

This motivates us to propose a domain-aware method for SDG that respects the constraint of non-access to target domains,
while achieving improved self-adaptability.
We explicitly extract domain knowledge using a domain prompt encoder and inject this knowledge into the feature representations. 
The prediction model is trained to process these fused features effectively. 
Since a single domain cannot support training a generalizable domain prompt encoder alone, we simulate a diverse set of surrogate domains by applying groups of semantically coherent data transformations to the source domain data. Inspired by FiLM \citep{perez2018film}, we modulate features using domain-specific prompts via feature-wise linear modulation.

A key challenge is that joint optimization of the task model and the domain prompt encoder does not guarantee meaningful domain knowledge extraction or effective feature fusion to support generalization. 
This often leads to a trivial solution where both components collapse into a naive data-augmentation-based model. In practice, the task model benefits from a well-trained domain prompt encoder, while the encoder’s effectiveness also depends on a well-optimized task model.
To address this interdependence, we formulate SDG as a bi-level optimization problem. 
The inner objective optimizes the task model using the original data from the training domain, while the outer objective optimizes the domain prompt encoder using data from the synthetic surrogate domains. We solve this bi-level objective using gradient estimation via 
implicit differentiation and 
gradient approximation.
We evaluate our proposed method, Bi-level Optimization for Single Domain Generalization (BiSDG), under the standard SDG setting and compare it with existing baselines. Experimental results demonstrate that BiSDG achieves state-of-the-art performance on widely used SDG benchmarks.

\section{Related Works}
\subsection{Single-Domain Generalization}
Single Domain Generalization (SDG) aims to train models that can generalize to unseen test domains using data from only a single training domain. 
This setting is more challenging than traditional multi-source domain generalization, which assumes access to data from multiple training domains.
Existing SDG approaches generally fall into three main groups.

The first group focuses on traditional data augmentation techniques to enhance in-domain generalization and robustness to corruptions.
For instance, CutOut \citep{devries2017improved} improves regularization by randomly masking square regions in input images, encouraging robust feature learning. MixUp \citep{zhang2018mixup} improves generalization by training on convex combinations of input pairs and their labels. AugMix \citep{hendrycks2019augmix} enhances robustness and uncertainty estimation by stochastically mixing multiple augmentations of an image and applying a consistency loss. AutoAugment \citep{cubuk2019autoaugment} learns optimal augmentation policies by exploring combinations of transformations with different probabilities and magnitudes, while RandAugment \citep{cubuk2020randaugment} simplifies the process by removing the search phase and using a reduced, interpretable space of augmentations.
Although these techniques improve robustness and in-domain performance, they do not explicitly address out-of-domain generalization.
A more recent approach, ACVC \citep{cugu2022attention}, applies traditional visual corruptions and enforces attention consistency between original and corrupted versions to improve domain generalization.

A second line of research introduces adversarial data augmentation methods aimed at out-of-domain generalization.
ADA \citep{volpi2018generalizing} proposes an iterative strategy that generates hard adversarial examples from fictitious target domains, using only single-source training data.
ME-ADA \citep{Long2020Maximum} incorporates a regularization term derived from the information bottleneck principle to encourage high-entropy perturbations, improving robustness to domain shifts.
\citet{zhang2023adversarial} propose adversarial perturbations applied to feature statistics, enabling models to learn representations that are robust to style variations. 
AdvST \citep{zheng2024advst} treats standard augmentations as learnable semantic transformations and uses adversarial training to create diverse semantic variants, optimizing a distributionally robust objective to enhance generalization to unseen domains.

A third group of methods leverages generative modeling to synthesize auxiliary training data that simulate domain shifts.
M-ADA \citep{qiao2020learning} generates challenging fictitious domains through adversarial training in a meta-learning framework, guided by a Wasserstein Auto-Encoder. 
L2D \citep{wang2021learning} introduces a style-complement module that diversifies training data by synthesizing style-varied but semantically consistent images using mutual information. 
PDEN \citep{li2021progressive} 
expands the training domain through photometric and geometric transformations, guided by contrastive learning to enforce class separation and improve generalization. 
MCL \citep{chen2023meta} applies meta-causal learning to infer and align causal factors underlying domain shifts by simulating auxiliary domains. 
\citet{xu2023simde} generate low-confidence samples by maximizing entropy and minimizing cross-entropy, using dual-view generators guided by a classifier.

Despite their differences, most existing SDG methods adopt a target-agnostic design philosophy due to the constraint of non-access to target domains. This results in what we call the “lottery dilemma”: models tend to perform better on target domains that are similar to the augmented training data, and worse on dissimilar ones.
To address this limitation, we propose a domain-aware approach to SDG that explicitly extracts domain knowledge and injects it into the learning process
to enhance adaptation and generalization. 
Our method uses a bi-level optimization framework,
where a domain prompt encoder captures domain-specific knowledge,
and the task model learns to effectively process the fused representation. 
This approach allows us to simulate domain shifts and enforce knowledge transfer, while respecting the constraint of not accessing target domain data. 

\subsection{Bi-Level Optimization}
Bi-level optimization is a powerful framework that enables the joint optimization of a nested pair of objectives, where the outer (upper-level) objective depends implicitly on the solution of the inner (lower-level) problem. This approach has been widely adopted across a range of tasks, such as hyperparameter tuning \cite{pedregosa2016hyperparameter}, neural architecture search \cite{liu2019darts}, and semi-supervised learning~\cite{heidari25}, due to its ability to model hierarchical dependencies in learning systems.
Recent works leverage bi-level optimization to tackle domain generalization across practical scenarios. 
\citet{jia2024meta} introduce a meta-learning algorithm with inner-loop and outer-loop objectives on a discrepancy measure to learn invariant representations. By adversarially minimizing both covariate and conditional shifts between source and simulated target domains in a bi-level setup, their method improves out-of-domain feature alignment and model robustness. To address data scarcity, \citet{qin2023bi} propose a two-tier meta-learning approach for few-shot domain generalization: an inner loop learns domain-specific embedding subspaces while an outer loop learns a shared base space, jointly optimized in a bi-level manner.
Beyond closed-set shifts, \citet{peng2024advancing} tackle open-set domain generalization with an evidential bi-level domain scheduler. Collectively, these studies demonstrate that integrating bi-level optimization into domain generalization pipelines can address a spectrum of practical challenges, leading to improved performance on unseen domains.

\section{Method}
\begin{figure*}[t]
\centering
\includegraphics[width=0.95\linewidth]{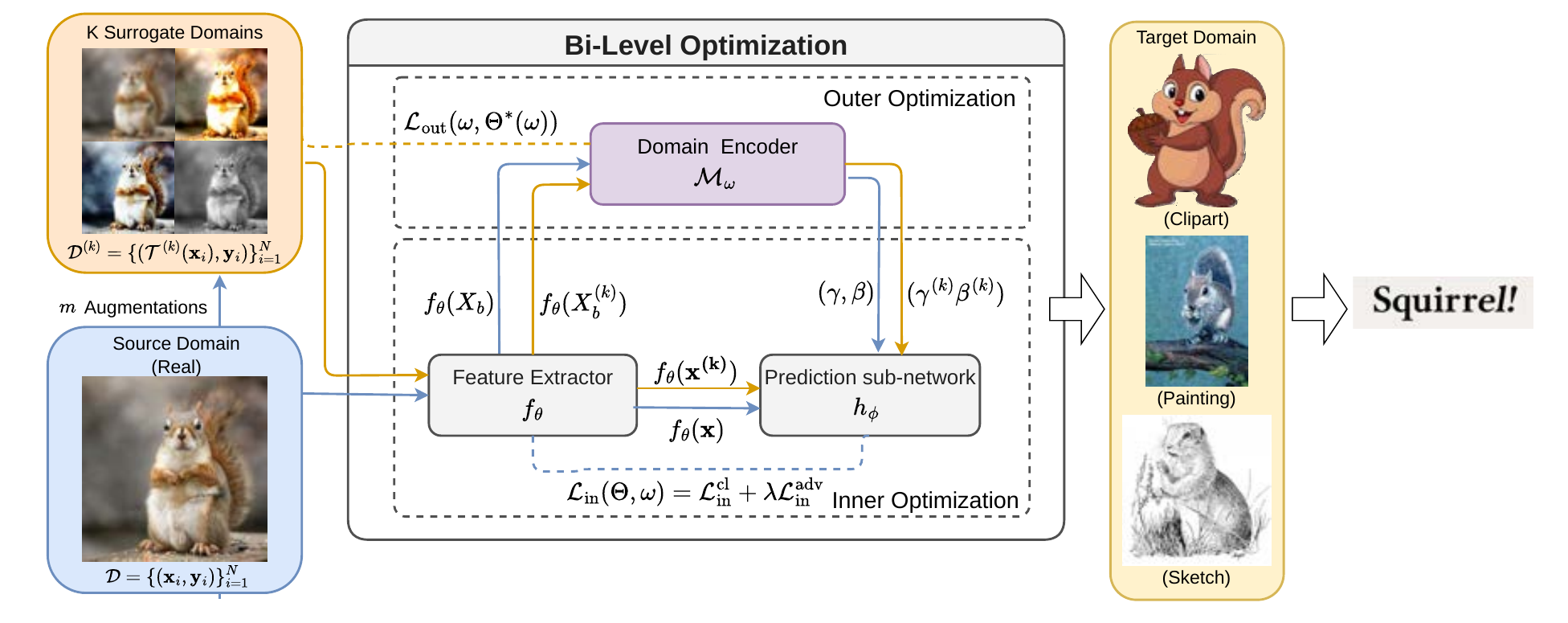}
\caption{
Overview of the BiSDG framework for Single Domain Generalization (SDG). Starting from a labeled source dataset, 
BiSDG constructs multiple surrogate domains via semantically coherent, label-preserving transformation pipelines to simulate unseen distribution shifts. 
A domain prompt encoder is introduced to capture domain-specific knowledge, which is deployed to 
perform FiLM-based feature modulation, supporting generalizable prediction. 
BiSDG formulates SGD as a bi-level optimization problem: 
	the inner objective updates the model parameters $\Theta=(\theta, \phi)$ 
	using the source domain, combining supervised cross-entropy loss $\mathcal{L}_{\text{in}}^{\text{cl}}$ and adversarial consistency loss $\mathcal{L}_{\text{in}}^{\text{adv}}$. 
	The outer objective updates the prompt encoder parameters $\omega$ to maximize generalization across surrogate domains. 
}
\label{fig:model}
\end{figure*}
\subsection{Problem Setup}
We study the task of Single Domain Generalization (SDG), where the objective is to train a model using labeled data from a single source domain and achieve robust performance on unseen target domains. 
Formally, let the source domain dataset be denoted by $\mathcal{D} = \{(\mathbf{x}_i, \mathbf{y}_i)\}_{i=1}^N$, where each input-label pair $(\mathbf{x}_i, \mathbf{y}_i)$ is drawn i.i.d. from a distribution $\mathcal{P}_{\mathrm{s}}$ over $\mathcal{X} \times \mathcal{Y}$. Our goal is to learn a prediction function $h_{\phi} \circ f_{\theta} : \mathcal{X} \rightarrow \mathcal{Y}$ that generalizes well to samples from an unknown target distribution $\mathcal{P}_{\mathrm{t}} \neq \mathcal{P}_{\mathrm{s}}$, which remains inaccessible during training.
The prediction model is composed of a feature extractor $f_{\theta}: \mathcal{X} \rightarrow \mathcal{Z}$, parameterized by $\theta$, and a classifier sub-network $h_{\phi}: \mathcal{Z} \rightarrow \mathcal{Y}$, parameterized by $\phi$. The primary challenge of SDG is that no domain alignment, calibration, or adaptation is possible using target data; instead, the model must learn domain-invariant representations solely from the available source domain data. 

\subsection{Approach}
We propose {Bi-Level Optimization for Single-Domain Generalisation} (BiSDG), a principled framework designed to tackle the challenge of generalizing from a single labeled domain to unseen target distributions. The core idea is to simulate plausible domain shifts during training and explicitly disentangle the learning of semantic representations from the modeling of domain-specific variations.
To simulate distribution shifts, BiSDG constructs a diverse set of {surrogate domains} via stochastic, label-preserving transformations applied to source data. These synthetic domains expose the model to a controlled spectrum of low-level perturbations without requiring access to target samples.
To capture domain-specific context, BiSDG introduces a { domain prompt} module that extracts a compact latent embedding from each domain. This embedding is used to condition the feature extractor via FiLM-based modulation, enabling adaptive feature normalization tailored to each domain's characteristics.
Crucially, BiSDG formulates the learning process as a bi-level optimization problem. The lower-level objective updates the feature extractor and classifier to minimize task loss under fixed domain prompts, while the upper-level objective updates the prompt encoder to maximize generalization across the surrogate domains. This separation allows the backbone to focus on learning robust task-relevant features, while the prompt encoder is guided to emphasize variations that challenge generalization. The overall framework in shown in Figure~\ref{fig:model}.

\subsubsection{Surrogate-Domain Synthesis}
In the single-domain generalization setting, the absence of target-domain samples prohibits direct exposure to distribution shift. To mitigate this, we construct a diverse collection of {surrogate domains} that serve as plausible stand-ins for unknown target environments. These surrogate domains are derived by applying label-preserving image transformations to source data, thereby inducing systematic low-level variations while preserving semantic content.

Specifically, we define a transformation pool $\{\mathcal{A}_i\}_{i=1}^{N^{\text{aug}}}$ containing a diverse set of photometric and geometric augmentations, such as Gaussian blur, color jitter, and histogram equalization. Rather than selecting augmentations at random, we group them into semantically coherent transformation pipelines, where each surrogate domain $\mathcal{D}^{(k)}$ is associated with a specific type of shift, e.g., texture, photometric, or geometric. Each pipeline $\mathcal{T}^{(k)}$, composed of $m$ augmentations, is applied consistently to all source samples to construct the $k$-th surrogate domain. Formally, we define:
\begin{equation}
  \label{eq:surrogate}
 \!\!\!
	\mathcal{D}^{(k)}=
  \bigl\{(\mathbf{x}^{(k)}_{i},\mathbf{y}_{i}):
        \mathbf{x}^{(k)}_{i}=\mathcal{T}^{(k)}(\mathbf{x}_{i}),
        (\mathbf{x}_{i},\mathbf{y}_{i})\in\mathcal{D}\bigr\}.
\end{equation}
 By deploying $K$ such transformation pipelines, we obtain a family of surrogate domains $\{\mathcal{D}^{(k)}\}_{k=1}^{K}$ that span a broad spectrum of visual attributes. This augmentation-driven diversification compels the model to learn features that are stable under a variety of appearance shifts, thereby simulating the challenge of generalizing to unseen domains.

\subsubsection{Domain Prompt Encoder}

A major challenge in training a shared feature extractor across multiple domains lies in the risk of representation collapse, wherein the network converges to features that reflect an average over all domains, thereby suppressing meaningful domain-specific variations. To retain sensitivity to these variations while preserving parameter efficiency, we introduce a domain prompt encoder $\mathcal{M}_\omega$ that learns to generate 
domain-aware modulation signals.

For each surrogate domain $\mathcal{D}^{(k)}$, we associate a latent prompt 
instantiated through a pair of modulation vectors: a scaling vector $\boldsymbol{\gamma}^{(k)}$ and a shifting vector $\boldsymbol{\beta}^{(k)}$. These are inferred from a mini-batch $X_b^{(k)} = \{\mathbf{x}_j^{(k)}\}_{j=1}^{N^\mathcal{B}}$ sampled from $\mathcal{D}^{(k)}$. To encode the domain-level distributional context in a permutation-invariant manner, we apply a domain encoder $\mathcal{M}_{\omega}$ over frozen features extracted by a stop-gradient copy of the backbone, denoted $f_{\bar{\theta}}$:
\begin{equation}
  (\boldsymbol{\gamma}^{(k)},\boldsymbol{\beta}^{(k)})
  =
  \mathcal{M}_{\omega}\bigl(f_{\bar{\theta}}(X_{b}^{(k)})\bigr).
\end{equation}
Here, $\mathcal{M}_{\omega}$ is implemented as a Set Transformer~\cite{lee2019set}, which naturally handles unordered inputs and enables the prompt to summarize the aggregate style of the domain batch.

Given an input $\mathbf{x}_i^{(k)}$ from the surrogate domain $k$, 
we extract its features 
$\mathbf{z}_i^{(k)} = f_{\theta}(\mathbf{x}_i^{(k)})$ and apply feature-wise linear modulation (FiLM) \cite{perez2018film} using the prompt-specific parameters,
while the features are standardized using batch-wise statistics $(\boldsymbol{\mu}^{(k)}, \boldsymbol{\sigma}^{(k)})$ before modulation:
\begin{equation}
	\label{eq:modulation}
\tilde{\mathbf{z}}_{i}^{(k)}
= \boldsymbol{\gamma}^{(k)} \odot \left( \frac{\mathbf{z}_{i}^{(k)} - \boldsymbol{\mu}^{(k)}}{\boldsymbol{\sigma}^{(k)}} \right)
+ \boldsymbol{\beta}^{(k)},
\end{equation}
where $\odot$ denotes element-wise multiplication. The resulting modulated features $\tilde{\mathbf{z}}_i^{(k)}$ are then fed into a domain-aware prediction head that conditions jointly on the 
original instance features and the domain prompt
modulated features. 
We denote this modulated prediction sub-network as $h_{\phi}$
that includes both the modulation process in Eq.~\eqref{eq:modulation}
and the prediction head, 
yielding predictions of the form 
$\hat{\mathbf{y}}_i^{(k)} = h_{\phi}(f_{\theta}(\mathbf{x}_i^{(k)}), \mathcal{M}_{\omega}(f_{\bar{\theta}}(X_b^{(k)})))$.

This FiLM-based strategy equips the model with lightweight, domain-adaptive behavior while keeping the shared backbone fixed across domains. Importantly, the modulation is global (feature-wise) rather than spatial, ensuring parameter efficiency and compatibility with standard architectures. 
By conditioning the model on learned domain prompts, we enable domain-informative inference to enhance 
adaptation and generalization.

\subsubsection{Bi-Level Optimization for SDG}
\label{sec:bi-level-formulation}

We formulate \textit{Single-Domain Generalization} (SDG) as a bi-level optimization problem, in which the goal is to simultaneously learn a robust prediction model that generalizes well across domain shifts, and
a domain prompt encoder that tackles domain variability via feature modulation. 
To decouple these two objectives, we introduce a bi-level formulation where 
the prediction model parameters are optimized at the inner level for task performance, while the prompt encoder is optimized at the outer level to maximize cross-domain generalization.

In this setting, the prediction model is parameterized by $\Theta = \{\theta, \phi\}$, where $f_{\theta}$ is the shared feature extractor and $h_{\phi}$ is the domain-aware prediction sub-network.  
At each iteration, we draw a batch
with $N^{\mathcal{B}}$ labeled instances $(X_b,Y_b)=\{(\mathbf{x}_i,\mathbf{y}_i)\}_{i=1}^{N^\mathcal{B}}$ 
from the source domain and generate $K$ surrogate domains $\{X_b^{(k)}\}_{k=1}^K$ via semantically coherent transformation pipelines.

\paragraph{Inner objective.}
Given fixed prompt parameters $\omega$, we update the model parameters $\Theta$ by minimizing a loss that combines standard cross-entropy with an adversarial regularizer that promotes local smoothness under input perturbations. 
The inner supervised loss on a training batch $(X_b,Y_b)=\{(\mathbf{x}_i,\mathbf{y}_i)\}_{i=1}^{N^\mathcal{B}}$ 
 is defined as:
\begin{equation}
\label{eq:inner-cl}
	\mathcal{L}_{\text{in}}^{\text{cl}}(\Theta) = \frac{1}{N^\mathcal{B}} \sum_{i=1}^{N^\mathcal{B}} \ell_{\text{CE}} \big( h_{\phi}(f_{\theta}(\mathbf{x}_i), \mathcal{M}_{\omega}(f_{\bar{\theta}}(X_b))), \mathbf{y}_i \big),
\end{equation}
where $\ell_{\text{CE}}$ is the cross-entropy loss and $\mathcal{M}_{\omega}(X_b)$ denotes 
the domain prompts derived from the batch. To improve robustness, we further include an adversarial KL term that encourages consistency under worst-case perturbations:
\begin{align}
\label{eq:inner-adv}
\mathcal{L}_{\text{in}}^{\text{adv}}(\Theta) \!= \!\frac{1}{N^\mathcal{B}}\!\! \sum_{i=1}^{N^\mathcal{B}} 
{\tiny\! \max_{\|\epsilon\|_2 \le \rho}} \!
\text{KL} 
\Big(\!\!\! 
	{\small
\begin{array}{l}
	h_{\phi}(f_{\theta}(\mathbf{x}_i), \mathcal{M}_{\omega}(f_{\bar{\theta}}(X_b))), \\
	h_{\phi}(f_{\theta}(\mathbf{x}_i \!\!+\! \epsilon), \mathcal{M}_{\omega}(f_{\bar{\theta}}(X_b)))
\end{array}
	}
\!\!\!	\Big),
\end{align}
where $\rho$ controls the strength of the perturbation $\epsilon$ added to input $\mathbf{x}_i$. The total inner objective is:
\begin{equation}
\label{eq:inner-loss}
    \mathcal{L}_{\text{in}}(\Theta,\omega) = \mathcal{L}_{\text{in}}^{\text{cl}}+ \lambda \mathcal{L}_{\text{in}}^{\text{adv}},
\end{equation}
where $\lambda$ balances classification accuracy and adversarial robustness. 
Minimizing the inner loss in Eq.~\eqref{eq:inner-loss} over $\Theta$ 
yields an updated model $\Theta^*(\omega)$ 
given the current prompt encoder.

\paragraph{Outer objective.}
Given model parameters $\Theta^*(\omega)$, 
the outer objective updates $\omega$ by optimizing generalization performance 
across all the surrogate domains. Specifically, for each surrogate domain $k$, 
we sample a batch $(X_b^{(k)}, Y_b)=\{(\mathbf{x}_i^{(k)}, \mathbf{y}_i)\}_{i=1}^{N^\mathcal{B}}$, 
and compute the outer objective as the cross-entropy loss on all the $K$ surrogate domains: 
\begin{align}
\label{eq:outer-loss}
	\mathcal{L}_{\text{out}}(\omega, \Theta^*(\omega)) &= 
	\frac{1}{K \!\cdot\! N^\mathcal{B}}\! \sum_{k=1}^{K} \sum_{i=1}^{N^\mathcal{B}} 
	\ell_{\text{CE}} \left( 
	\hat{\mathbf{y}}_i^{(k)}, 
	\mathbf{y}_i \right),
\\
	\mbox{where}\quad &\hat{\mathbf{y}}_i^{(k)} = 
	 h_{\phi^*}(f_{\theta^*}(\mathbf{x}_i^{(k)}), 
	\mathcal{M}_{\omega}(f_{\bar{\theta^*}}(X_b^{(k)}))).
\nonumber	
\end{align}
Here, $\Theta^*(\omega) = \{\theta^*, \phi^*\}$ denote the inner-optimal parameters
that can be treated as functions of the outer parameters $\omega$.
By minimizing this loss, the prompt encoder learns to 
automatically adapt the prediction model 
to enhance its generalization performance.

\paragraph{Bi-level formulation.}
Combining the inner and outer objectives, BiSDG is formulated
as the following bi-level optimization problem:
\begin{align}
\label{eq:bi-level}
    \min_{\omega} \quad & \mathcal{L}_{\text{out}}(\omega, \Theta^*(\omega)) \\
    \text{s.t.} \quad & \Theta^*(\omega) = \arg\min_{\Theta} \mathcal{L}_{\text{in}}(\Theta,\omega).
    \nonumber
\end{align}
This training scheme enables the model to learn generalizable representations, 
while encouraging the prompt encoder to expose informative cross-domain variability to enhance adaptation. 
The optimization procedure will be illustrated in the following section and 
the bi-level training algorithm in presented in Algorithm~\ref{alg:algorithm}.

\begin{algorithm}[t]
  \caption{Training Algorithm for BiSDG}
  \begin{algorithmic}
    \STATE \textbf{Input}: training dataset $\mathcal{D}$; 
	  initialized parameters $\Theta =(\theta,\phi)$ and $\omega$;
	  $K$ transformation pipelines $\{\mathcal{T}_k\}_{k=1}^K$; 
	  and hyperparameters.\\
	  \STATE \textbf{Output}: learned model parameters  $\Theta^*$ and $\omega^*$ \\[1ex]
    \STATE Generate $K$ surrogate domains $\{\mathcal{D}^{(k)}\}_{k=1}^K$ via Eq.~\eqref{eq:surrogate}.
    \STATE Set $t=1, \Theta^1 = \Theta$.
    \FOR{ iter $=1$ to maxiters}
   
    \FOR{ minibatch $X_b \in \mathcal{D}$}
    	\STATE Sample $K$ surrogate variants $\{X_b^{(k)}\}_{k=1}^K$ of $X_b$ from surrogate domains $\{\mathcal{D}^{(k)}\}_{k=1}^K$. 
    	\STATE Compute inner loss $\mathcal{L}_{\text{in}}$ 
	  on $X_b$  via Eqs.~\eqref{eq:inner-cl}, \eqref{eq:inner-adv}, \eqref{eq:inner-loss} 
	\STATE Calculate $\Theta^{t+1} $ using Eq.~\eqref{eq:single_update}
	\STATE Compute $\mathcal{L}_{\text{out}}(\omega, \Theta^{t+1})$ on surrogate batches $\{X_b^{(k)}\}_{k=1}^K$ via Eq.~\eqref{eq:outer-loss}
	\STATE Calculate $\delta(\Theta^{t+1})$ using Eq.~\eqref{eq:deltatheta}
	\STATE Calculate gradient $\nabla_{\omega}\mathcal{L}_\text{out} $ 
	  on $X_b$ and  $\{X_b^{(k)}\}_{k=1}^K$ via Eq.~\eqref{eq:approximation}
        \STATE Update $\omega \leftarrow \omega- \alpha_\omega\nabla_{\omega}\mathcal{L}_\text{out}$
        \STATE Update $\Theta^{t+1} \leftarrow \Theta^t-\alpha_\Theta \nabla_{\Theta}\mathcal{L}_\text{in}(\Theta^t,\omega)$ 
        \STATE $t\leftarrow t+1$
     \ENDFOR
   
    \ENDFOR
  \STATE $\Theta^* = \Theta^{t}, \quad \omega^*=\omega$
  \end{algorithmic}
\label{alg:algorithm}
\end{algorithm}
\subsection{Optimization Procedure}
The bi-level optimization problem in Eq.~\eqref{eq:bi-level}
can be solved by minimizing the outer objective with respect to
the outer parameters $\omega$, 
while the inner-level parameters $\Theta^*$ are implicit functions of $\omega$ 
determined by the inner minimization. 

In each iteration of the minimization, we compute the gradient of 
the outer loss with respect to $\omega$ and account for the bi-level structure
using the chain rule as follows:
\begin{equation}
\label{eq:outer_derivative}
\nabla_\omega \mathcal{L}_{\text{out}} = \nabla_{\Theta^*} \mathcal{L}_{\text{out}} \cdot \nabla_\omega \Theta^* + \nabla_\omega \mathcal{L}_{\text{out}}(\omega,\bar{\Theta}^*),
\end{equation}
where $\bar{\Theta}^*$ denotes the stop gradient version of $\Theta^*$. 
Here, the first term represents the indirect gradient path through $\Theta^*$
using the chain rule, 
while the second term captures the direct dependence of the outer loss on $\omega$.

For simplicity, 
at the $t$-th iteration, the inner-optimal parameters $\Theta^*$  
can be approximated using a single-step gradient descent update over the inner loss:
\begin{equation}
\begin{aligned}
\label{eq:single_update}
\Theta^* &= \Theta^{t+1} = \Theta^t - \alpha_\Theta \nabla_\Theta \mathcal{L}_{\text{in}}(\Theta^t,\omega),
\end{aligned}
\end{equation}
where $\alpha_\Theta$ is the learning rate for the respective update. 
Moreover, we denote the outer gradient over $\Theta$ as $\delta(\Theta)$,
such that
\begin{equation}
\label{eq:deltatheta}
\delta(\Theta^{t+1}) = \nabla_{\Theta} \mathcal{L}_{\text{out}}(\omega,\Theta^{t+1}).
\end{equation}
The full gradient computation can then be expressed through the following propositions. 
\begin{proposition}
Using the chain rule, 
at the $t$-th iteration, the total gradient of $\mathcal{L}_{\text{out}}$ with respect to $\omega$ is given by:
\begin{equation}
\begin{aligned}
\label{eq:outer-derivitive}
\nabla_\omega \mathcal{L}_{\text{out}} =\;& 
- \alpha_\Theta \cdot \delta(\Theta^{t+1}) \cdot \nabla_\omega \nabla_\Theta \mathcal{L}_{\text{in}}(\Theta^t,\omega) \\
&+ \nabla_\omega \mathcal{L}_{\text{out}}(\omega,\bar{\Theta}^{t+1})
\end{aligned}
\end{equation}
\end{proposition}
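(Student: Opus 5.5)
The plan is to start from the chain-rule decomposition in Eq.~\eqref{eq:outer_derivative} and substitute the one-step surrogate $\Theta^*=\Theta^{t+1}$ from Eq.~\eqref{eq:single_update}. The second term of Eq.~\eqref{eq:outer_derivative}, $\nabla_\omega \mathcal{L}_{\text{out}}(\omega,\bar{\Theta}^*)$, becomes $\nabla_\omega \mathcal{L}_{\text{out}}(\omega,\bar{\Theta}^{t+1})$ immediately. This term collects every direct dependence of $\mathcal{L}_{\text{out}}$ on $\omega$ through the prompt encoder $\mathcal{M}_\omega$ in Eq.~\eqref{eq:outer-loss}, with $\Theta^{t+1}$ held fixed. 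So all the work is in the indirect term $\nabla_{\Theta^*}\mathcal{L}_{\text{out}}\cdot\nabla_\omega\Theta^*$.

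For the indirect term, we would argue in two steps.

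\textbf{First factor.} By the definition in Eq.~\eqref{eq:deltatheta}, evaluating $\nabla_{\Theta^*}\mathcal{L}_{\text{out}}$ at $\Theta^*=\Theta^{t+1}$ gives exactly $\delta(\Theta^{t+1})$.

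\textbf{Second factor.} We differentiate Eq.~\eqref{eq:single_update} with respect to $\omega$. The iterate $\Theta^t$ is produced by earlier updates and is treated as a constant at iteration $t$; this is the standard truncated, one-step-unrolled convention, consistent with Algorithm~\ref{alg:algorithm}. Hence $\nabla_\omega\Theta^t=0$ and
\begin{equation*}
\nabla_\omega \Theta^{t+1} = -\alpha_\Theta\, \nabla_\omega \nabla_\Theta \mathcal{L}_{\text{in}}(\Theta^t,\omega).
\end{equation*}
Here $\alpha_\Theta$ is a constant step size, and $\mathcal{L}_{\text{in}}$ depends on $\omega$ through $\mathcal{M}_\omega(f_{\bar{\theta}}(X_b))$ in Eqs.~\eqref{eq:inner-cl} and \eqref{eq:inner-adv}. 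Because $f_{\bar\theta}$ is a stop-gradient copy, no extra coupling between $\theta$ and the prompt input arises.

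Multiplying the two factors gives $-\alpha_\Theta\,\delta(\Theta^{t+1})\cdot\nabla_\omega\nabla_\Theta\mathcal{L}_{\text{in}}(\Theta^t,\omega)$. Adding the direct term then yields Eq.~\eqref{eq:outer-derivitive}.

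The main obstacle is bookkeeping rather than analysis. First, we must fix the convention that $\delta$ is a row vector (a Jacobian with respect to $\Theta$) and that $\nabla_\omega\nabla_\Theta\mathcal{L}_{\text{in}}$ is the mixed second-derivative matrix of size $\dim\Theta\times\dim\omega$, so the product is well defined. Second, we must justify that the maximization inside $\mathcal{L}_{\text{in}}^{\text{adv}}$ does not contribute additional $\omega$-dependence. For that step we would invoke Danskin's theorem: at the maximizing perturbation $\epsilon^*$, the gradient of the max equals the gradient of the KL term with $\epsilon^*$ held fixed. This assumes $\epsilon^*$ is unique or the objective is suitably regular, and we would state this smoothness assumption explicitly.
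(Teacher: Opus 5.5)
Your proposal is correct and follows the paper's proof: you differentiate the one-step update Eq.~\eqref{eq:single_update} with $\Theta^t$ held fixed to get $\nabla_\omega\Theta^{t+1}=-\alpha_\Theta\nabla_\omega\nabla_\Theta\mathcal{L}_{\text{in}}(\Theta^t,\omega)$, identify $\nabla_{\Theta^*}\mathcal{L}_{\text{out}}$ with $\delta(\Theta^{t+1})$, and substitute into Eq.~\eqref{eq:outer_derivative}; your explicit conventions just make visible what the paper leaves implicit. The Danskin step is not needed, because the proposition leaves $\nabla_\omega\nabla_\Theta\mathcal{L}_{\text{in}}$ symbolic, and it also would not give what you want: Danskin's theorem is a first-order result, so the mixed second derivative of the max still contains the term $\nabla_\epsilon\nabla_\Theta\text{KL}\cdot\nabla_\omega\epsilon^*$, and you cannot simply freeze $\epsilon^*$ there.
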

\begin{proof}
From Eq.~\eqref{eq:single_update}, the gradient of $\Theta^*$ with respect to $\omega$ can
be computed as:
\begin{equation}
\begin{aligned}
\nabla_\omega \Theta^* = \nabla_\omega \Theta^{t+1} &= \nabla_\omega \left(\Theta^t - \alpha_\Theta \nabla_\Theta \mathcal{L}_{\text{in}}(\Theta^t, \omega) \right) \\
&= -\alpha_\Theta \cdot \nabla_\omega \nabla_\Theta \mathcal{L}_{\text{in}}(\Theta^t,  \omega).
\end{aligned}
\end{equation}
Substituting this expression into Eq.~\eqref{eq:outer_derivative}, we obtain Eq.\eqref{eq:outer-derivitive}.
\end{proof}

Considering the difficulty of computing the second-order derivative 
$\nabla_\omega \nabla_\Theta \mathcal{L}_{\text{in}}(\Theta^t, \omega)$, 
we propose using a finite difference approximation to avoid the expensive computation 
while retaining the influence of second-order terms.
This is summarized in the following proposition. 
\begin{proposition}
Let $\epsilon_\Theta$ be a very small positive constant. The total gradient $\nabla_\omega \mathcal{L}_{\text{out}}$ 
can be approximated as follows without directly computing second-order derivatives:
\begin{equation}
\begin{aligned}
\label{eq:approximation}
\nabla_\omega \mathcal{L}_{\text{out}} \approx
&\;\nabla_\omega \mathcal{L}_{\text{out}}(\omega,\bar{\Theta}^*)- \\
	&\frac{\alpha_\Theta}{2\epsilon_\Theta}\left(\nabla_\omega\mathcal{L}_{\text{in}}(\Theta^+, \omega) - \nabla_\omega\mathcal{L}_{\text{in}}(\Theta^-, \omega)\right),
\end{aligned}
\end{equation}
where the perturbed variables $\Theta^+$ and $\Theta^-$ are defined as:
\begin{equation}
\begin{aligned}
\Theta^+ &= \Theta^t + \epsilon_\Theta \cdot \delta(\Theta^{t+1}), \\
\Theta^- &= \Theta^t - \epsilon_\Theta \cdot \delta(\Theta^{t+1}).
\end{aligned}
\end{equation}
\end{proposition}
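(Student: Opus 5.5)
The plan is to start from the exact expression in Proposition 1, Eq.~\eqref{eq:outer-derivitive}, and replace only the mixed second-order term $\delta(\Theta^{t+1})\cdot\nabla_\omega\nabla_\Theta\mathcal{L}_{\text{in}}(\Theta^t,\omega)$ by a central finite difference. The direct term $\nabla_\omega\mathcal{L}_{\text{out}}(\omega,\bar{\Theta}^{t+1})$ is carried over unchanged; it is the first term of Eq.~\eqref{eq:approximation} with $\bar{\Theta}^*=\bar{\Theta}^{t+1}$ from Eq.~\eqref{eq:single_update}. So the whole argument reduces to showing that the vector--Jacobian product of $\delta:=\delta(\Theta^{t+1})$ with the mixed Hessian equals a difference of first-order $\omega$-gradients of $\mathcal{L}_{\text{in}}$, evaluated at $\Theta^\pm$, up to $O(\epsilon_\Theta^2)$.

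\textbf{Key step.} I will treat $\delta$ as a fixed vector, not depending on $\Theta$ through the perturbation. Define the scalar-parameter map
\[
g(s)=\nabla_\omega\mathcal{L}_{\text{in}}(\Theta^t+s\,\delta,\;\omega),
\]
assuming $\mathcal{L}_{\text{in}}$ is $C^3$ jointly in $(\Theta,\omega)$ near $(\Theta^t,\omega)$. By the chain rule,
\[
g'(0)=\nabla_\Theta\nabla_\omega\mathcal{L}_{\text{in}}(\Theta^t,\omega)\,\delta .
\]
By symmetry of mixed partials (Schwarz's theorem), this is exactly the transpose-appropriate form of $\delta\cdot\nabla_\omega\nabla_\Theta\mathcal{L}_{\text{in}}(\Theta^t,\omega)$. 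I will need a sentence fixing the row/column conventions of the paper's notation so the two products are identified.

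Next I Taylor-expand $g$ around $0$ to second order with remainder:
\[
g(\pm\epsilon_\Theta)=g(0)\pm\epsilon_\Theta g'(0)+\tfrac{\epsilon_\Theta^2}{2}g''(0)+O(\epsilon_\Theta^3).
\]
Subtracting the two expansions, the even terms cancel, which gives
\[
\frac{g(\epsilon_\Theta)-g(-\epsilon_\Theta)}{2\epsilon_\Theta}=g'(0)+O(\epsilon_\Theta^2).
\]
Since $g(\pm\epsilon_\Theta)=\nabla_\omega\mathcal{L}_{\text{in}}(\Theta^\pm,\omega)$ by the definition of $\Theta^\pm$, substituting this into Eq.~\eqref{eq:outer-derivitive} and multiplying by $-\alpha_\Theta$ yields Eq.~\eqref{eq:approximation}. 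The error is $O(\alpha_\Theta\epsilon_\Theta^2\|\delta\|^3)$, which justifies the ``$\approx$'' as $\epsilon_\Theta\to0$.

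\textbf{Main obstacle.} The delicate part is not the calculus but the regularity and well-definedness of $\mathcal{L}_{\text{in}}$. It contains the inner max in Eq.~\eqref{eq:inner-adv}, which is generally not smooth in $\Theta$, and the stop-gradient copy $f_{\bar\theta}$ inside $\mathcal{M}_\omega$. I would handle this by stating the smoothness assumption explicitly. One option is to assume the adversarial maximizer is unique and varies smoothly, so that a Danskin-type argument gives differentiability. The other is to treat the maximizing $\epsilon$ as fixed at the current iterate, as done in practice. I would also note that $\bar\theta$ is held fixed under the perturbation, so only the trainable $\Theta$ is shifted. Under these conventions the finite-difference argument above goes through verbatim.
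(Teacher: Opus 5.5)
Your proposal is correct. It rests on the same idea as the paper: replace the mixed term $\delta(\Theta^{t+1})\cdot\nabla_\omega\nabla_\Theta\mathcal{L}_{\text{in}}(\Theta^t,\omega)$ from Proposition 1 by a symmetric finite difference of $\nabla_\omega\mathcal{L}_{\text{in}}$ along $\delta(\Theta^{t+1})$, and carry the direct term over unchanged. Where you differ is the justification of that step.

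\textbf{How the paper justifies it.} The paper writes $\nabla_\Theta\mathcal{L}_{\text{in}}(\Theta^t,\omega)\approx\bigl(\mathcal{L}_{\text{in}}(\Theta^+,\omega)-\mathcal{L}_{\text{in}}(\Theta^-,\omega)\bigr)/\bigl(2\epsilon_\Theta\,\delta(\Theta^{t+1})\bigr)$. It then applies $\nabla_\omega$ and cancels $\delta(\Theta^{t+1})$ against the prefactor. Taken literally, dividing a scalar by a vector is not well defined. A central difference of a scalar along $\delta$ only recovers the directional derivative $\delta^\top\nabla_\Theta\mathcal{L}_{\text{in}}$, not the full gradient. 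The paper's manipulation is best read as shorthand for exactly the identity you prove.

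\textbf{What your route adds.} Your function $g(s)=\nabla_\omega\mathcal{L}_{\text{in}}(\Theta^t+s\,\delta,\omega)$ turns that shorthand into a genuine argument. So do the identification of $g'(0)$ with the Hessian--vector product via Schwarz's theorem and the cancellation of even Taylor terms. You also get an explicit error rate. Your remarks on conventions supply something the paper omits. Here $\nabla_\Theta$ excludes the stop-gradient path, so the difference quotient approximates the right object only if the copy $f_{\bar\theta}$ inside $\mathcal{M}_\omega$ is held at $\theta^t$ when evaluating at $\Theta^\pm$. Otherwise an extra cross term through $\bar\theta$ enters $g'(0)$. Likewise, the inner maximization in $\mathcal{L}_{\text{in}}^{\text{adv}}$ must be handled as you describe for the derivatives to exist.

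\textbf{One bookkeeping fix.} If $\mathcal{L}_{\text{in}}$ is only $C^3$, then $g$ is only $C^2$, and the central-difference error is $o(\epsilon_\Theta)$ rather than $O(\epsilon_\Theta^2)$. The bound $O(\alpha_\Theta\epsilon_\Theta^2\|\delta\|^3)$ comes from $g'''$, which involves fourth derivatives of $\mathcal{L}_{\text{in}}$. So that rate needs $C^4$, or Lipschitz third derivatives. For the qualitative ``$\approx$'' as $\epsilon_\Theta\to 0$, $C^2$ already suffices.
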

\begin{proof}
We approximate the second-order term $\nabla_\omega \nabla_\theta \mathcal{L}_{\text{in}}$ using symmetric finite differences ~\cite{bottou2012stochastic}. Specifically, using the small perturbation $\epsilon_\Theta \cdot \delta(\Theta^{t+1})$, we write:
\begin{equation}
\nabla_\Theta \mathcal{L}_{\text{in}}(\Theta^t, \omega)
\approx \frac{\mathcal{L}_{\text{in}}(\Theta^+,  \omega) - \mathcal{L}_{\text{in}}(\Theta^-, \omega)}{2\epsilon_\Theta \cdot \delta(\Theta^{t+1})}
\end{equation}
Therefore, the chain rule term in the total gradient becomes:
\begin{align}
&\nabla_{\Theta^*} \mathcal{L}_{\text{out}} \cdot \nabla_\omega \Theta^*
	\nonumber\\
	=& -\alpha_\Theta\cdot \delta(\Theta^{t+1}) \cdot \nabla_\omega \nabla_\Theta\mathcal{L}_{\text{in}}(\Theta^t, ,\omega) 
	\nonumber\\
	\approx & -\alpha_\Theta\cdot \delta(\Theta^{t+1}) \cdot \frac{\nabla_\omega \mathcal{L}_{\text{in}}(\Theta^+; \omega) - \nabla_\omega \mathcal{L}_{\text{in}}(\Theta^-, \omega)}{2\epsilon_\Theta\cdot \delta(\Theta^{t+1})} 
	\nonumber\\
	= & -\frac{\alpha_\Theta}{2\epsilon_\Theta} \left(\nabla_\omega \mathcal{L}_{\text{in}}(\Theta^+, \omega) - \nabla_\omega \mathcal{L}_{\text{in}}(\Theta^-,  \omega)\right)
\end{align}
Substituting this approximation into Eq.~\eqref{eq:outer_derivative}, 
we obtain the approximated total gradient in Eq.\eqref{eq:approximation}.
\end{proof}

\begin{table*}[t]
\centering
\small
	\caption{Classification accuracy and standard deviation results (\%) on the PACS dataset. Best results are in bold font.}
\label{tab:pacs-experiment}

\begin{tabular}{l|ccccccccc|c}
\hline
Target & MixUp & CutOut & ADA & ME-ADA & AugMix & RandAug & ACVC& L2D&AdvST   & BiSDG (Ours) \\ \hline
Art & 52.8 & 59.8 & 58.0 & 60.7 & 63.9 & 67.8 & 67.8 & 67.6 & ${69.2_{(1.4)}}$&$\mathbf{70.5}_{(1.4)}$  \\
Cartoon & 17.0 & 21.6 & 25.3 & 28.5 & 27.7 & 28.9 & 30.3 & 42.6 & $55.3_{(2.0)}$&$\mathbf{55.9}_{(1.7)}$  \\
Sketch & 23.2 & 28.8 & 30.1 & 29.6 & 30.9 & 37.0 & 46.4 & 47.1 &$67.7_{(1.5)}$ & $\mathbf{69.5}_{(1.9)}$ \\
Avg. & 31.0 & 36.7 & 37.8 & 39.6 & 40.8 & 44.6 & 48.2 & 52.5 & 64.1 &$\mathbf{65.3}$ \\ \hline
\end{tabular}
\vskip -0.05 in
\end{table*}
\begin{table*}[t]
\centering
\small
	\caption{Classification accuracy and standard deviation results (\%) 
	on the four target domains (SVHN, MNIST-M, SYN, and USPS) of the Digits benchmark, with MNIST as the source domain. Best results are in bold font.}
\label{tab:digits}

\setlength{\tabcolsep}{10pt}
\begin{tabular}{l|cccc|c}
\hline
Method & SVHN & MNIST-M & SYN & USPS & Avg. \\ \hline
 ERM \cite{koltchinskii2011oracle} & 27.8 & 52.7 & 39.7 & 76.9 & 49.3 \\
 CCSA \cite{motiian2017unified}& 25.9 & 49.3 & 37.3 & 83.7 & 49.1\\
JiGen \cite{carlucci2019domain} & 33.8 & 57.8 & 43.8 & 77.2 & 53.1\\
ADA \cite{volpi2018generalizing}& 35.5 & 60.4 & 45.3 & 77.3 & 54.6\\
 ME-ADA \cite{Long2020Maximum} & 42.6 & 63.3 & 50.4 & 81.0 & 59.3\\ 
M-ADA \cite{qiao2020learning} & 42.6 & 67.9 & 49.0& 78.5 & 59.5\\
AutoAug \cite{cubuk2019autoaugment} & 45.2 & 60.5 & 64.5 & 80.6 & 62.7 \\
RandAug \cite{cubuk2020randaugment} & 54.8 & 74.0 & 59.6 & 77.3 & 66.4 \\
  PDEN \cite{li2021progressive} & 62.2 & 82.2 & 69.4 & 85.3 & 74.8\\
    MCL\cite{chen2023meta} & 69.9 & 78.3 & 78.4 & 88.5 & 78.8 \\
    RSDA \cite{volpi2019addressing} & ${47.7}_{(4.8)}$ & ${81.5}_{(1.6)}$ & ${62.0}_{(1.2)}$ & ${83.1}_{(1.2)}$ & 68.5\\
        AdvST \cite{zheng2024advst} & ${67.5}_{(0.7)}$ & ${79.8}_{(0.7)}$ & ${78.1}_{(0.9)}$ & ${94.8}_{(0.4)}$ & 80.1\\
   \hline
BiSDG (Ours)   & $\mathbf{70.1}_{(0.6)}$ & $\mathbf{85.2}_{(0.3)}$ & $\mathbf{79.8}_{(0.5)}$ & $\mathbf{96.0}_{(0.2)}$ & $\mathbf{82.7}$\\

\hline

\end{tabular}
\vskip -0.1 in
\end{table*}

\section{Experiments}
\subsection{Experimental Setup}
\label{section:experimental-setup}
\paragraph{Datasets.} 
We evaluate our approach on three widely-used domain generalization benchmarks: {Digits}, {PACS}, and {DomainNet}.
The \textbf{Digits} benchmark consists of five digit classification datasets, MNIST~\cite{lecun1998gradient}, MNIST-M~\cite{ganin2015unsupervised}, SVHN~\cite{netzer2011reading}, SYN~\cite{ganin2015unsupervised}, and USPS~\cite{denker1989neural}, all sharing the same 10 digit classes (0--9). Following standard practice, we use MNIST as the source domain and evaluate generalization on the remaining four datasets, each exhibiting distinct visual styles and acquisition conditions.
The \textbf{PACS} dataset~\cite{li2017deeper} contains images from four distinct domains, Art, Cartoon, Photo, and Sketch, featuring seven shared object categories. We designate the Photo domain as the source and test generalization to the remaining three domains, which differ substantially in terms of texture, abstraction, and visual style.
The \textbf{DomainNet} dataset~\cite{peng2019moment} is the most challenging benchmark in our study, comprising six highly diverse domains: Real, Infograph, Clipart, Painting, Quickdraw, and Sketch, spanning 345 object categories. We use the Real domain for training and evaluate performance on the remaining five, which exhibit large domain shifts and high intra-class variability.

\begin{table*}[t]
\centering
\caption[DomainNet result]{Classification accuracy and standard deviation results (\%) on the DomainNet dataset.}
\label{tab:domainnet}
{\small 
\begin{tabular}{l|ccccccccc|c}
\hline
Target    & MixUp & CutOut & CutMix & ADA  & ME-ADA & RandAug & AugMix & ACVC       &AdvST   & BiSDG (Ours)                                        \\ \hline
Painting  & 38.6  & 38.3   & 38.3   & 38.2 & 39.0   & 41.3    & 40.8   & 41.3    & $42.3_{(0.1)}$      & $\mathbf{43.1}_{(0.3)}$ \\
Infograph & 13.9  & 13.7   & 13.5   & 13.8 & 14.0   & 13.6    & 13.9   & 12.9          &  ${14.8}_{(0.1)}$&$\mathbf{15.9}_{(0.3)}$  \\
Clipart   & 38.0  & 38.4   & 38.7   & 40.2 & 41.0   & 41.1    & 41.7    & ${42.8}$  &$41.5_{(0.4)}$&$\mathbf{43.5}_{(0.4)} $ \\
Sketch    & 26.0  & 26.2   & 26.9   & 24.8 & 25.3   & 30.4    & 29.8   & 30.9    & $30.8_{(0.3)}$      & $\mathbf{31.9}_{(0.4)}$ \\
Quickdraw & 3.7   & 3.7    & 3.6    & 4.3  & 4.3    & 5.3     & 6.3    & 6.6   & $5.9_{(0.2)}$&$\mathbf{7.1}_{(0.2)}$    \\
Avg.      & 24.0  & 24.1   & 24.2   & 24.3 & 24.7   & 26.3    & 26.5   & 26.9          & $27.1_{(0.2)}$&$\mathbf{28.3}$ \\ \hline
\end{tabular}}

\end{table*}
%
\begin{table*}[t]
\centering
\caption{Ablation study 
results (\%) on the four target domains 
	of the Digits benchmark, with MNIST as the source domain.}
\label{tab:ablation}
{\small
\setlength{\tabcolsep}{12pt}
\begin{tabular}{l|cccc|c}
\hline
Method & SVHN & MNIST-M & SYN & USPS & Avg. \\ \hline
       BiSDG (Ours)   & $\mathbf{70.1}_{(0.6)}$ & $\mathbf{85.2}_{(0.3)}$ & $\mathbf{79.8}_{(0.5)}$ & $\mathbf{96.0}_{(0.2)}$ & $\mathbf{82.7}$\\\hline
        $- \text{w/o } \mathcal{L}_\text{in}^{\text{adv}}$ &${68.1}_{(1.4)}$ & ${80.1}_{(0.6)}$  & ${75.3}_{(0.3)}$  & ${93.3}_{(0.5)}$  & ${79.2}$ \\
             $- \text{w/o standardization } $ &${69.8}_{(0.9)}$ & ${83.5}_{(0.7)}$  & ${77.0}_{(0.4)}$  & ${94.7}_{(0.2)}$  & ${81.2}$ \\
        \hline
        
\end{tabular}}

\vskip -0.1 in
\end{table*}

\paragraph{Implementation Details.}

We follow established protocols for architecture and training across all datasets, consistent with prior work~\cite{wang2021learning}. For the {Digits} benchmark, we employ LeNet as the backbone and train on the first 10{,}000 images from MNIST, with all samples resized to $32 \times 32$ and converted to RGB. The model is trained for 50 epochs using a batch size of 32 and 
initial learning rates of $\alpha_\Theta=10^{-4}$ and $\alpha_\omega=10^{-5}$, 
which are reduced by a factor of 10 after 25 epochs. 
For {PACS}, we fine-tune a ResNet-18 model pre-trained on ImageNet, using images resized to $224 \times 224$. 
Training is conducted for 50 epochs with a batch size of 32 and learning rates of $\alpha_\Theta=10^{-3}$ and $\alpha_\omega=10^{-4}$, 
decayed using a cosine annealing schedule. 
For the {DomainNet} benchmark, we again use ResNet-18 and train for 200 epochs with a batch size of 128, also using cosine learning rate scheduling.
We use the following hyperparameters for BiSDG: adversarial loss weight $\lambda = 0.5$, number of surrogate domains $K = 5$, number of augmentations per surrogate domain $m = 3$, gradient approximation scale $\epsilon_\Theta = 0.01$, and perturbation strength $\rho=1$.
To simulate diverse domain shifts during training, we construct five surrogate domains by composing distinct triplets of augmentations from a shared transformation pool. The Color-Shifted Domain applies HSV shift, contrast adjustment, and solarization to simulate lighting and sensor variation. The Geometric Distortion Domain introduces rotation, translation, and shear to capture spatial deformations. The Photometric Degradation Domain uses inversion, posterization, and histogram equalization to mimic degraded imaging conditions. The Texture Alteration Domain perturbs fine textures via sharpening, cutout, and contrast changes. Lastly, the Scale and Shape Variation Domain applies scaling, rotation, and cutout to model geometric variation and partial occlusion. 
These curated domains introduce controlled but substantial variability, promoting generalization to unseen distributions.
All experiments are repeated five times with different random seeds, and we report the mean accuracy and standard deviation to ensure statistical robustness.

\subsection{Comparison Results}
We evaluate our method against a broad spectrum of baselines, 
including
MixUp~\cite{zhang2018mixup}, CutOut~\cite{devries2017improved}, CutMix~\cite{yun2019cutmix}, AutoAugment~\cite{cubuk2019autoaugment}, RandAugment~\cite{cubuk2020randaugment}, and AugMix~\cite{hendrycks2019augmix}, ACVC~\cite{cugu2022attention},  ERM~\cite{koltchinskii2011oracle}, CCSA~\cite{motiian2017unified}, JiGen~\cite{carlucci2019domain}, ADA~\cite{volpi2018generalizing}, ME-ADA~\cite{Long2020Maximum}, RSDA~\cite{volpi2019addressing}, L2D~\cite{wang2021learning}, PDEN~\cite{li2021progressive}, MCL \cite{chen2023meta}, and AdvST~\cite{zheng2024advst}.

Table~\ref{tab:pacs-experiment} reports the test classification accuracy 
on the three target domains of the PACS dataset using ResNet-18 as the backbone. 
This benchmark features significant domain shifts between the Photo source domain and the target domains of Art, Cartoon, and Sketch. BiSDG achieves the highest accuracy on all three target domains and the best overall average accuracy of 65.3\%, outperforming the strongest baseline, AdvST (64.1\%), by 1.2\%. Notably, BiSDG achieves an accuracy of 55.9\% on the most challenging domain, Cartoon, compared to 55.3\% by AdvST. These results highlight the effectiveness of our method in learning semantically meaningful features that generalize across domains with diverse visuals.

In Table~\ref{tab:digits}, we report generalization results on a digit recognition benchmark, Digits, using MNIST as the source domain and SVHN, MNIST-M, SYN, and USPS as target domains. BiSDG significantly outperforms all prior methods, achieving an average accuracy of 82.7\%, surpassing the second-best method (AdvST at 80.1\%) by a substantial margin. BiSDG obtains the best results on all four target domains, including 70.1\% on SVHN, 85.2\% on MNIST-M, 79.8\% on SYN, and 96.0\% on USPS.

Table~\ref{tab:domainnet} presents results on the challenging DomainNet benchmark, which involves large-scale classification across multiple domains with significant visual diversity. BiSDG achieves the best performance on all five target domains, with an average accuracy of 28.3\%, outperforming AdvST (27.1\%) by 1.2\%. The improvements are especially notable on Infograph (15.9\% vs. 14.8\%) and Quickdraw (7.1\% vs. 5.9\%). These results confirm that BiSDG effectively captures core features that generalize across unseen domains despite substantial distribution shifts.

\subsection{Ablation Studies}
To evaluate the effectiveness of key components in our framework, we design two ablation variants of BiSDG: (1) $- \text{w/o } \mathcal{L}_\text{in}^{\text{adv}}$, which removes the adversarial regularizer from the inner objective to assess its contribution to robust feature learning; 
and (2) $- \text{w/o standardization}$, which disables the feature standardization step applied before domain-aware modulation, in order to examine the role of normalization in stabilizing prompt-guided adaptation.
As shown in Table~\ref{tab:ablation}, both components play a crucial role in the final performance. Removing the adversarial loss $\mathcal{L}_\text{in}^{\text{adv}}$ results in a significant drop in average accuracy from 82.7\% to 79.2\%, confirming that enforcing consistency under perturbations is essential for improving robustness to domain shifts. Similarly, omitting the standardization step reduces performance to 81.2\%, with particularly noticeable declines on SVHN and SYN, two domains that exhibit large visual disparities from MNIST. 
These findings demonstrate that both adversarial consistency and standardized feature modulation are critical to BiSDG’s strong generalization performance.
 \begin{figure}[t]
\centering
\begin{subfigure}{0.20\textwidth}
\centering
\includegraphics[width = \textwidth, height=1.in]{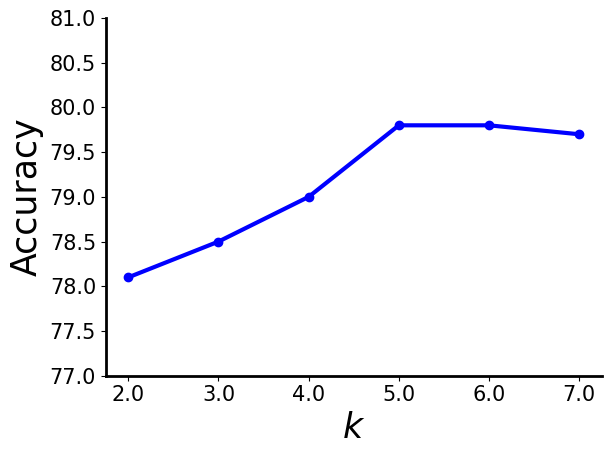}
\caption{$K$}
\end{subfigure}
\begin{subfigure}{0.20\textwidth}
\centering
\includegraphics[width = \textwidth, height=1.in]{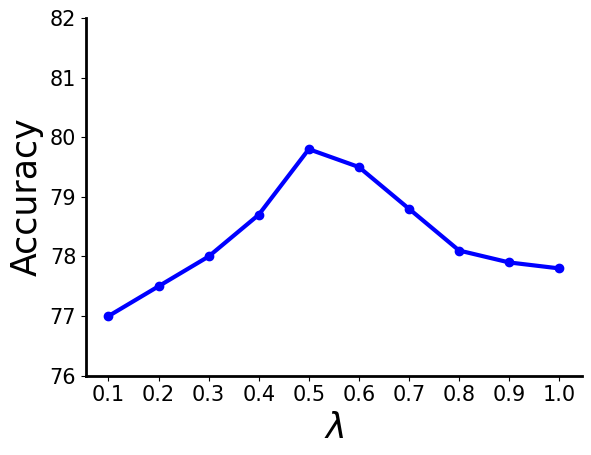}
\caption{$\lambda$}
\end{subfigure}
\begin{subfigure}{0.20\textwidth}
\centering
\includegraphics[width = \textwidth, height=1.in]{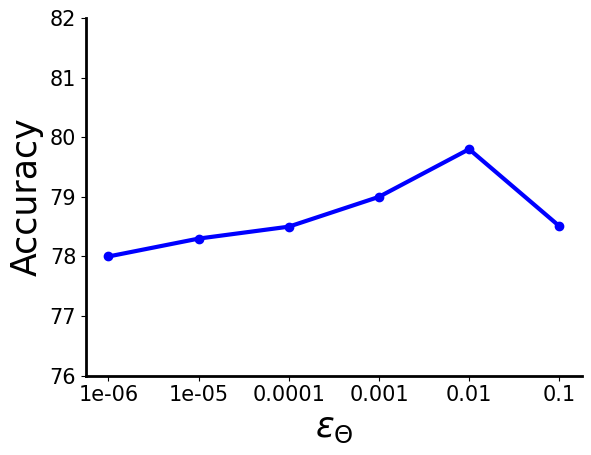}

\caption{$\epsilon_\Theta$}
\end{subfigure}
\begin{subfigure}{0.20\textwidth}
\centering
\includegraphics[width = \textwidth, height=1.in]{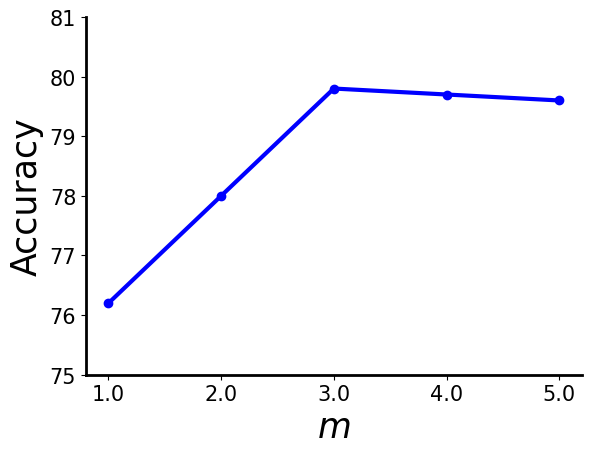}

\caption{$m$}
\end{subfigure}
\caption{Sensitivity analysis for four hyper-parameters  $K$, $\lambda$, $\epsilon_\Theta$ and $m$ on 
	 test domain SYN of the Digits benchmark.}

\label{fig:hyper_sen}
 \vskip -.1in
\end{figure}
\subsection{Hyper-parameter Sensitivity Analysis}
We perform a comprehensive sensitivity analysis to assess the robustness of BiSDG with respect to key hyperparameters. Specifically, we investigate the impact of the number of surrogate domains $K$,  the trade-off weight $\lambda$ for the adversarial loss in the inner objective, 
the perturbation magnitude $\epsilon_\Theta$ used in the finite difference gradient approximation, and the number of augmentations $m$ used to construct each surrogate domain. 
All experiments are conducted on the Digits benchmark, using classification accuracy on the SYN domain as the evaluation metric.

As shown in Figure~\ref{fig:hyper_sen}, BiSDG exhibits relatively stable performance across a wide range of hyperparameter settings. Increasing the number of surrogate domains $K$ (Figure~\ref{fig:hyper_sen}a) and the number of augmentations $m$ (Figure~\ref{fig:hyper_sen}d) initially improves performance, reaching optimal values at $K=5$ and $m=3$, respectively. Beyond these points, performance shows a slight decline as the hyperparameter values increase, but the overall drop is marginal, indicating that BiSDG maintains strong generalization capability without requiring precise tuning of these parameters. 
The trade-off weight $\lambda$ and perturbation magnitude $\epsilon_\Theta$ exhibit a similar trend (Figures~\ref{fig:hyper_sen}b and \ref{fig:hyper_sen}c), though performance degrades more rapidly after reaching their optimal values at $\lambda=0.5$ and $\epsilon_\Theta=0.01$. This suggests that these two hyperparameters are more sensitive and play a more critical role in determining model performance. 

\section{Conclusion}
We have introduced BiSDG, a bi-level optimization framework designed for the challenging setting of Single Domain Generalization (SDG). By simulating domain shifts through curated surrogate domains and decoupling domain-specific modulation from task learning, BiSDG effectively exposes the model to distributional variability while maintaining semantic consistency. The proposed domain prompt encoder enables lightweight yet expressive domain-aware adaptation.
Our bi-level formulation ensures that the backbone focuses on robust representation learning while the prompt encoder is optimized to induce generalization over domain shifts.
Extensive experiments on multiple benchmarks demonstrate that BiSDG consistently outperforms existing state-of-the-art approaches.

{
 \small
 \bibliographystyle{ieeenat_fullname}
 \bibliography{ref}

@String(CVPR= {IEEE Conf. Comput. Vis. Pattern Recog.})

@String(ICCV= {Int. Conf. Comput. Vis.})

@String(ECCV= {Eur. Conf. Comput. Vis.})

@String(ICLR = {Int. Conf. Learn. Represent.})

@String(AAAI = {AAAI})

@String(CVPRW= {IEEE Conf. Comput. Vis. Pattern Recog. Worksh.})

@String(CVPR  = {CVPR})

@String(ICCV  = {ICCV})

@String(ECCV  = {ECCV})

@String(ICLR  = {ICLR})

@String(CVPRW= {CVPRW})

@inproceedings{heidari25,
  title={Bi-Level Optimization for Semi-Supervised Learning with Pseudo-Labeling},
  author={Marzi Heidari and Yuhong Guo},
  booktitle={AAAI conference on artificial intelligence (AAAI)},
  year={2025}
}

@inproceedings{qiao2020learning,
  title={Learning to learn single domain generalization},
  author={Qiao, Fengchun and Zhao, Long and Peng, Xi},
  booktitle={IEEE/CVF Conference on Computer Vision and Pattern Recognition (CVPR)},
  year={2020}
}

@inproceedings{li2021progressive,
  title={Progressive domain expansion network for single domain generalization},
  author={Li, Lei and Gao, Ke and Cao, Juan and Huang, Ziyao and Weng, Yepeng and Mi, Xiaoyue and Yu, Zhengze and Li, Xiaoya and Xia, Boyang},
  booktitle={IEEE/CVF Conference on Computer Vision and Pattern Recognition (CVPR)},
  year={2021}
}

@inproceedings{zheng2024advst,
  title={{AdvST:} Revisiting data augmentations for single domain generalization},
  author={Zheng, Guangtao and Huai, Mengdi and Zhang, Aidong},
  booktitle={AAAI conference on artificial intelligence (AAAI)},

  year={2024}
}

@inproceedings{motiian2017unified,
  title={Unified deep supervised domain adaptation and generalization},
  author={Motiian, Saeid and Piccirilli, Marco and Adjeroh, Donald A and Doretto, Gianfranco},
  booktitle={International Conference on Computer Vision (ICCV)},
  year={2017}
}

@inproceedings{lee2019set,
  title={Set transformer: A framework for attention-based permutation-invariant neural networks},
  author={Lee, Juho and Lee, Yoonho and Kim, Jungtaek and Kosiorek, Adam and Choi, Seungjin and Teh, Yee Whye},
  booktitle={International conference on machine learning (ICML)} ,
  year={2019},
  organization={PMLR}
}

@inproceedings{perez2018film,
  title={Film: Visual reasoning with a general conditioning layer},
  author={Perez, Ethan and Strub, Florian and De Vries, Harm and Dumoulin, Vincent and Courville, Aaron},
  booktitle={AAAI conference on artificial intelligence (AAAI)},
  year={2018}
}

@inproceedings{pedregosa2016hyperparameter,
  title={Hyperparameter optimization with approximate gradient},
  author={Pedregosa, Fabian},
  booktitle={International conference on machine learning (ICML)},
  year={2016},
}

@article{jia2024meta,
  title={Meta-learning the invariant representation for domain generalization},
  author={Jia, Chen and Zhang, Yue},
  journal={Machine Learning},
  volume = {113},
  pages={1661--1681},
  year={2024},
}

@inproceedings{qin2023bi,
  title={Bi-level meta-learning for few-shot domain generalization},
  author={Qin, Xiaorong and Song, Xinhang and Jiang, Shuqiang},
  booktitle={IEEE/CVF conference on computer vision and pattern recognition (CVPR)},
  year={2023}
}

@inproceedings{liu2019darts,
  title={{DARTS}: Differentiable Architecture Search},
  author={Liu, Hanxiao and Simonyan, Karen and Yang, Yiming},
  booktitle={International Conference on Learning Representations (ICLR)},
    year={2019}
}

@inproceedings{peng2024advancing,
  title={Advancing open-set domain generalization using evidential bi-level hardest domain scheduler},
  author={Peng, Kunyu and Wen, Di and Yang, Kailun and Luo, Ao and Chen, Yufan and Fu, Jia and Sarfraz, M Saquib and Roitberg, Alina and Stiefelhagen, Rainer},
  booktitle={Advances in Neural Information Processing Systems (NeurIPS)} ,
  year={2024}
}

@article{bottou2012stochastic,
  title={Stochastic gradient descent tricks},
  author={Bottou, L{\'e}on},
  journal={Neural Networks: Tricks of the Trade: Second Edition},
  year={2012},
  pages = {421–436},
  publisher={Springer}
}

@inproceedings{du2020learning,
  title={Learning to learn with variational information bottleneck for domain generalization},
  author={Du, Yingjun and Xu, Jun and Xiong, Huan and Qiu, Qiang and Zhen, Xiantong and Snoek, Cees GM and Shao, Ling},
  booktitle={European Conference on Computer Vision (ECCV)},
  year={2020},
}

@inproceedings{volpi2018generalizing,
  title={Generalizing to unseen domains via adversarial data augmentation},
  author={Volpi, Riccardo and Namkoong, Hongseok and Sener, Ozan and Duchi, John C and Murino, Vittorio and Savarese, Silvio},
  booktitle={Advances in Neural Information Processing Systems (NeurIPS)},
  year={2018}
}

@inproceedings{carlucci2019domain,
  title={Domain generalization by solving jigsaw puzzles},
  author={Carlucci, Fabio M and D'Innocente, Antonio and Bucci, Silvia and Caputo, Barbara and Tommasi, Tatiana},
  booktitle={IEEE/CVF Conference on Computer Vision and Pattern Recognition (CVPR)},
  year={2019}
}

@inproceedings{mahajan2021domain,
  title={Domain generalization using causal matching},
  author={Mahajan, Divyat and Tople, Shruti and Sharma, Amit},
  booktitle={International Conference on Machine Learning (ICML)},
  year={2021},
  
}

@inproceedings{cubuk2019autoaugment,
  title={AutoAugment: Learning Augmentation Strategies From Data},
  author={Cubuk, Ekin D and Zoph, Barret and Mane, Dandelion and Vasudevan, Vijay and Le, Quoc V},
  booktitle={IEEE/CVF Conference on Computer Vision and Pattern Recognition (CVPR)},
  year={2019}
}

@inproceedings{cubuk2020randaugment,
  title={RandAugment: Practical Automated Data Augmentation with a Reduced Search Space},
  author={Cubuk, Ekin Dogus and Zoph, Barret and Shlens, Jon and Le, Quoc},
  booktitle={Advances in Neural Information Processing Systems (NeurIPS)},
  year={2020}
}

@article{lecun1998gradient,
  title={Gradient-based learning applied to document recognition},
  author={LeCun, Yann and Bottou, L{\'e}on and Bengio, Yoshua and Haffner, Patrick},
  journal={Proceedings of the IEEE},
  volume = {86, no, 11},
  pages = {2278--2324},
  year={1998},
}

@article{netzer2011reading,
  title={Reading digits in natural images with unsupervised feature learning},
  author={Netzer, Yuval and Wang, Tao and Coates, Adam and Bissacco, Alessandro and Wu, Bo and Ng, Andrew Y},
  journal={NeurIPS workshop on deep learning and unsupervised feature learning},
  year={2011}
}

@inproceedings{yun2019cutmix,
  title={Cutmix: Regularization strategy to train strong classifiers with localizable features},
  author={Yun, Sangdoo and Han, Dongyoon and Oh, Seong Joon and Chun, Sanghyuk and Choe, Junsuk and Yoo, Youngjoon},
  booktitle={International Conference on Computer Vision (ICCV)},
  year={2019}
}

@inproceedings{ganin2015unsupervised,
  title={Unsupervised domain adaptation by backpropagation},
  author={Ganin, Yaroslav and Lempitsky, Victor},
  booktitle={International Conference on Machine Learning (ICML)},
  year={2015},
  
}

@inproceedings{li2017deeper,
  title={Deeper, broader and artier domain generalization},
  author={Li, Da and Yang, Yongxin and Song, Yi-Zhe and Hospedales, Timothy M},
  booktitle={International Conference on Computer Vision (ICCV)},
  year={2017}
}

@book{koltchinskii2011oracle,
  title={Oracle inequalities in empirical risk minimization and sparse recovery problems: {\'E}cole D’{\'E}t{\'e} de Probabilit{\'e}s de Saint-Flour XXXVIII-2008},
  author={Koltchinskii, Vladimir},
  year={2011},
  publisher={Springer}
}

@inproceedings{Long2020Maximum,
 author = {Zhao, Long and Liu, Ting and Peng, Xi and Metaxas, Dimitris},
 title = {Maximum-Entropy Adversarial Data Augmentation for Improved Generalization and Robustness},
 booktitle = {Advances in Neural Information Processing Systems (NeurIPS)},
 year = {2020}
}

@inproceedings{volpi2019addressing,
  title={Addressing model vulnerability to distributional shifts over image transformation sets},
  author={Volpi, Riccardo and Murino, Vittorio},
  booktitle={International Conference on Computer Vision (ICCV)},
  year={2019}
}

@inproceedings{chen2023meta,
  title={Meta-causal learning for single domain generalization},
  author={Chen, Jin and Gao, Zhi and Wu, Xinxiao and Luo, Jiebo},
  booktitle={IEEE/CVF Conference on Computer Vision and Pattern Recognition (CVPR)},
  year={2023}
}

@inproceedings{zhang2018mixup,
  title={mixup: Beyond Empirical Risk Minimization},
  author={Zhang, Hongyi and Cisse, Moustapha and Dauphin, Yann N and Lopez-Paz, David},
  booktitle={International Conference on Learning Representations (ICLR)},
  year={2018}
}

@inproceedings{denker1989neural,
  title={Neural network recognizer for hand-written zip code digits},
  author={Denker, John S and Gardner, WR and Graf, Hans Peter and Henderson, Donnie and Howard, Richard E and Hubbard, W and Jackel, Lawrence D and Baird, Henry S and Guyon, Isabelle},
  booktitle={Advances in Neural Information Processing Systems (NeurIPS)},
  year={1989},
}

@inproceedings{wang2021learning,
  title={Learning to diversify for single domain generalization},
  author={Wang, Zijian and Luo, Yadan and Qiu, Ruihong and Huang, Zi and Baktashmotlagh, Mahsa},
  booktitle={International Conference on Computer Vision (ICCV)},
  year={2021}
}

@inproceedings{cugu2022attention,
  title={Attention consistency on visual corruptions for single-source domain generalization},
  author={Cugu, Ilke and Mancini, Massimiliano and Chen, Yanbei and Akata, Zeynep},
  booktitle={IEEE/CVF Conference on Computer Vision and Pattern Recognition Workshops (CVPRW)},
  year={2022}
}

@inproceedings{hendrycks2019augmix,
  title={AugMix: A simple data processing method to improve robustness and uncertainty},
  author={Hendrycks, Dan and Mu, Norman and Cubuk, Ekin Dogus and Zoph, Barret and Gilmer, Justin and Lakshminarayanan, Balaji},
  booktitle={International Conference on Learning Representations (ICLR)},
  year={2019}
}

@inproceedings{peng2019moment,
  title={Moment matching for multi-source domain adaptation},
  author={Peng, Xingchao and Bai, Qinxun and Xia, Xide and Huang, Zijun and Saenko, Kate and Wang, Bo},
  booktitle={International Conference on Computer Vision (ICCV)},
  year={2019}
}

@inproceedings{li2018learning,
  title={Learning to generalize: Meta-learning for domain generalization},
  author={Li, Da and Yang, Yongxin and Song, Yi-Zhe and Hospedales, Timothy M},
  booktitle={AAAI Conference on Artificial Intelligence (AAAI)},
  year={2018}
}

@article{devries2017improved,
  title={Improved regularization of convolutional neural networks with cutout},
  author={DeVries, Terrance and Taylor, Graham W},
  journal={arXiv preprint arXiv:1708.04552},
  year={2017}
}

@inproceedings{zhang2023adversarial,
  title={Adversarial style augmentation for domain generalization},
  author={Zhang, Yabin and Deng, Bin and Li, Ruihuang and Jia, Kui and Zhang, Lei},
  booktitle={arXiv preprint arXiv:2301.12643},
  year={2023}
}

@inproceedings{xu2023simde,
  title={Simde: A simple domain expansion approach for single-source domain generalization},
  author={Xu, Qinwei and Zhang, Ruipeng and Wu, Yi-Yan and Zhang, Ya and Liu, Ning and Wang, Yanfeng},
  booktitle={IEEE/CVF Conference on Computer Vision and Pattern Recognition (CVPR)},
  year={2023}
}

@inproceedings{li2018deep,
  title={Deep domain generalization via conditional invariant adversarial networks},
  author={Li, Ya and Tian, Xinmei and Gong, Mingming and Liu, Yajing and Liu, Tongliang and Zhang, Kun and Tao, Dacheng},
  booktitle={European Conference on Computer Vision (ECCV)},
  year={2018}
}

@inproceedings{cha2021swad,
  title={Swad: Domain generalization by seeking flat minima},
  author={Cha, Junbum and Chun, Sanghyuk and Lee, Kyungjae and Cho, Han-Cheol and Park, Seunghyun and Lee, Yunsung and Park, Sungrae},
  booktitle={Advances in Neural Information Processing Systems (NeurIPS)},
  year={2021}
}

@inproceedings{zhang2023flatness,
  title={Flatness-aware minimization for domain generalization},
  author={Zhang, Xingxuan and Xu, Renzhe and Yu, Han and Dong, Yancheng and Tian, Pengfei and Cui, Peng},
  booktitle={International Conference on Computer Vision (ICCV)},
  year={2023}
}
}

\end{document}